\title{\LARGE \bf A Formal Methods Approach to Pattern Synthesis\\ in Reaction Diffusion Systems
}
\author{Ebru Aydin Gol, Ezio Bartocci and Calin Belta
\thanks{Ebru Aydin Gol (ebru@bu.edu) and Calin Belta (cbelta@bu.edu) are with Boston University.
Ezio  Bartocci (ezio.bartocci@tuwien.ac.at) is with Vienna University of Technology.}
}
\def  \ft#1     {\footnote{#1} }
\newtheorem{remark}{\bf Remark}[section]
\newtheorem{proposition}{\bf Proposition}[section]
\newtheorem{definition}{\bf Definition}[section]
\newtheorem{example}{\bf Example}[section]
\newtheorem{theorem}{\bf Theorem}[section]
\newtheorem{problem}{\bf Problem}[section]
\renewcommand{\vec}[1]{\mathbf{#1}}
\renewcommand{\cal}[1]{\mathcal{#1}}
\newcommand{\bb}[1]{\mathbb{#1}}
\renewcommand{\phi}{\varphi}
\begin{document}

\newcommand{\tuple}[1]{\langle #1\rangle}
\newcommand{\sat}{\models}
\newcommand{\pr}{{\it P}}
\newcommand{\Peek}{{\it Peek}}
\newcommand{\comment}[1]{}

\newcommand{\fullOnly}[1]{}

\newcommand{\confOnly}[1]{#1}

\newcommand{\gap}{{\it gap}}
\newcommand{\pkvar}{Q}
\newcommand{\pk}{q}



\maketitle
\thispagestyle{empty}
\pagestyle{empty}

\begin{abstract}
We propose a technique to detect and generate patterns in a network 
of locally interacting dynamical systems. Central to our approach is a 
novel spatial superposition logic, whose semantics is defined over the 
quad-tree of a partitioned image. We show that formulas in this logic can 
be efficiently learned from positive and negative examples of several 
types of patterns. 
We also demonstrate that pattern detection, which is implemented as 
a model checking algorithm, performs very well for test data sets 
different from the learning sets. 
We define a quantitative semantics for the logic and integrate the model 
checking algorithm with particle swarm optimization in a computational 
framework for synthesis of parameters leading to desired patterns in 
reaction-diffusion systems. 
\end{abstract}

\section{INTRODUCTION}
\label{sec:introduction}

From the stripes of a zebra and the spots on a leopard to the
filaments (Anabaena) \cite{golden1998}, spirals, squares (Thiopedia rosea),
and vortex (Paenibacillus) \cite{scherrer1986} formed by single-cell organisms, patterns can
be found everywhere in nature. Pattern formation is at the very
origin of morphogenesis and developmental biology, and it is at the core of technologies such as
self-assembly, tissue engineering, and amorphous computing.
Even though it received a lot of attention from diverse communities such as
biology, computer science, and physics, the problem
of pattern formation is still not well understood. 

Pattern recognition is usually formulated as a machine learning problem \cite{CMB06}, in which
patterns are characterized either statistically \cite{JADRJM00} or
through a structural relationship among their features
\cite{PAVT80}. Despite its success in several application areas
\cite{VRHM99}, pattern recognition still lacks a formal
foundation. Can patterns be specified in a formal language with 
well-defined syntax and semantics? Can we develop algorithms 
for pattern detection from specification given in such a language? 
Given a large collection of locally interacting agents, can we design 
parameter synthesis rules, control and interaction strategies
guaranteeing the emergence of global patterns? In this paper, 
by drawing inspiration from model checking \cite{Emerson90,Clarke99}, we provide partial answers to these questions.  

We address the following problem: {\it Given a network of locally 
interacting dynamical systems, and given sets of positive and 
negative examples of a desired pattern, find parameter values 
that guarantee the occurrence of the pattern in the network at steady state.}  
Our approach is based on a novel spatial superposition logic, 
called {\it Tree Spatial Superposition Logic} (TSSL), 
whose semantics is defined over quad-trees of partitioned images. 
The decision of whether a pattern exists in an image becomes a 
model checking problem. A pattern descriptor is a TSSL formula, 
and we employ machine-learning techniques to infer such a formula 
from the given positive and negative examples of the pattern. 
To synthesize parameters of the original networked system leading 
to a desired pattern, we use a particle swarm optimization (PSO) 
algorithm. The optimization fitness function is given by a measure 
of satisfaction induced by the quantitative semantics that we introduce 
for the logic. We present examples showing that formulas in the proposed 
logic are good classifiers for some commonly encountered patterns.
While the overall algorithm can, in principle, be applied to any network 
of locally interacting systems, in this paper we focus on the Turing 
reaction-diffusion system~\cite{Turing1952}, and show that 
pattern-producing parameters can be automatically generated with our method. 

The rest of the paper is organized as follows.  In Section~\ref{sec:related} 
we discuss the work.  In Section~\ref{sec:system} we formulate the problem and outline our approach. We define the syntax 
and semantics of TSSL in Section~\ref{sec:logic}. A machine learning 
technique to learn TSSL formulas from positive and negative examples 
of desired patterns is developed in Section~\ref{sec:learning}. The solution 
to the pattern generation problem is presented in Section~\ref{sec:design} 
as a supervised, iterative procedure that integrates quantitative model 
checking and optimization. We conclude with final 
remarks and directions for future work in Section~\ref{sec:conclusion}.


\section{RELATED WORK}
\label{sec:related}

\emph{Pattern recognition} is a well-established technique in machine 
learning. Given a data set and a set of classes, the goal is to 
assign each data to one class, or to provide a ``most likely" matching 
of the data to the classes. The two main steps in pattern recognition are: 
(a) to extract distinctive features~\cite{Julesz1981,Dalal2005,Belongie2002,Lowe1999} 
with relevant information from a set of 
input data representing the pattern of interest and (b) to build, using 
one of the several available machine learning techniques
(see~\cite{Russell} for a detailed survey),
an accurate classifier trained with the extracted features.
The descriptor chosen in \emph{feature extraction phase} 
depends on the application domain and the specific problem.  

This work is related to pattern recognition 
in \emph{computer vision}, where these descriptors may assume
different forms. Feature descriptors such as \emph{Textons}~\cite{Julesz1981} and
\emph{Histograms of Oriented Gradients} (HoG)~\cite{Dalal2005} 
are concerned with statistical information of color distribution or
of intensity gradients and edge directions. The \emph{scale-invariant feature
transform} (SIFT), proposed by Lowe in~\cite{Lowe1999},
is based on the appearance of an object at particular interest points, 
and is invariant to image scale and rotation. 
The \emph{shape context}~\cite{Belongie2002} is another feature descriptor 
intended to describe the shape of an object by the points of 
its contours and the surrounding context.

In this paper we establish an interesting connection between verification and pattern recognition. 
Both classical verification~\cite{Rizk2009,Donze2010,Donze2011,Fainekos2007a,Fainekos2009}
and pattern recognition techniques aim to verify (and possibly quantify) the emergence of a behavioral pattern. 
We propose logic formulas as pattern descriptors and verification techniques as pattern classifiers. 
The logical nature of such pattern descriptors allows to reason 
about the patterns and to infer interesting properties. 
For example, in~\cite{grosu2009learning}, the spatial 
modalities are used to characterize self-similar (fractal) texture.
Furthermore, combining different pattern descriptors
using both modal and logical operators is quite intuitive.

 This paper is inspired by the original work on morphogenesis by 
 Alan Turing~\cite{Turing1952}, and is closely related to~\cite{grosu2009learning}. 
 In the latter, the authors introduced a Linear Spatial Superposition Logic (LSSL), 
 whose formulas were interpreted, as in this paper, over quad-tree partitions. 
 The existence of a pattern in an image corresponded to the existence of a path 
 in the corresponding tree from the root to the leaf corresponding to a representative 
 point in the image. As a consequence, the method was shown to work for spirals, 
 for which the center was chosen as the representative point. The tree logic proposed 
 here is more general as it does not depend on the choice of such a point and captures 
 the pattern ``globally".  For example, the patterns considered in this paper
 cannot be expressed in LSSL, because they rely on a tree representation 
 rather than a path representation.
 
 As opposed to \cite{grosu2009learning}, we also define a 
 quantitative semantics for the logic, and use the distance to satisfaction as a fitness 
 function while searching for pattern-producing parameters. This quantitative semantics 
 and the discounted model checking on a computational tree are 
inspired from~\cite{Alfaro2009}, with the notable difference that we do not need a 
metric distance, but rather a measure of satisfiability. Such measures have also been used in~\cite{Rizk2009,Donze2010,Donze2011,Fainekos2007a,Fainekos2009}. 
The main novelty of this paper, compared to the other pattern recognition approaches, 
is that we can quantify ``how far" a system is from producing a desired pattern. This, which is possible due to the quantitative semantics of our logic, enables the use of optimization algorithms to search for pattern-producing parameters.

\newcommand{\paramSet}{\vec p}
\newcommand{\paramSetPattern}{\vec p^\ast}
\newcommand{\pattern}{+}
\newcommand{\nopattern}{-}


\newcommand{\systemIJ}{S_{i,j}}
\newcommand{\stateIJ}{\vec x_{i,j}}
\newcommand{\stateIJN}[1][n]{x_{i,j}^{(#1)}}
\newcommand{\inputIJ}{\vec u_{i,j}}
\newcommand{\inputIJN}[1][n]{u_{i,j}^{(#1)}}
\newcommand{\systemP}[1][{\vec p}]{\vec S^{(#1)}}

\section{PROBLEM FORMULATION}\label{sec:system}

{\emph{Notation.}} We use $\mathbb{R}$, $\mathbb{R}_+$, $\mathbb{N}$ and $\mathbb{N}_+$ to denote the set of real numbers, non-negative reals, integer numbers, and non-negative integers, respectively. For any $c \in \mathbb{R}$ and set $\mathcal{S} \subseteq \mathbb{R}$, $\mathcal{S}_{> c} := \{ x\in \mathcal{S} \mid x > c\}$, and for any $a,b \in \mathbb{R}$, $\mathcal{S}_{[a,b]} := \{ x \in \mathcal{S} \mid a\leq x \leq b\}$. 

A reaction-diffusion system $\vec S$ is modeled as a spatially distributed and locally interacting $K \times K$ rectangular grid of identical systems, where  each location $(i,j) \in \mathbb{N}_{[1,K]} \times \mathbb{N}_{[1,K]}$ corresponds to a system:
\begin{equation}\label{eq:rd_system}
	\systemIJ : \frac{d \stateIJN}{dt} = D_n (\inputIJN -\stateIJN) + f_n( \stateIJ, \vec R), \quad n=1,\ldots,N, 
\end{equation}
where $\stateIJ= [\stateIJN[1],\ldots, \stateIJN[N]]$ is the state vector of system $\systemIJ$, which captures the concentrations of all species of interest. $\vec D$ and $\vec  R$ are the parameters of system $\vec S$. $\vec D = [D_1,\ldots, D_N] \in \mathbb{R}^N_+$ is the vector of diffusion coefficients. $\vec R \in  \mathbb{R}^{P - N}$ is the vector of parameters that defines the local dynamics $f_n : \mathbb{R}^N_+ \times \mathbb{R}^{P - N} \rightarrow \mathbb{R}$ for each of the species $n=1,\ldots,N$.  Note that the parameters and dynamics are the same for all systems $\systemIJ, (i,j) \in \mathbb{N}_{[1,K]} \times \mathbb{N}_{[1,K]}$. The diffusion coefficient is strictly positive for diffusible species and it is $0$ for non-diffusible species. Finally, $\inputIJ= [\inputIJN[1],\ldots, \inputIJN[N]]$ is the input of system $\systemIJ$ from the neighboring systems:
\[ \inputIJN= \frac{1}{|\nu_{i,j}|} \sum_{v \in \nu_{i,j}} x^{(n)}_v,\]
where $\nu_{i,j}$ denotes the set of indices of systems adjacent to $\systemIJ$.

Given a parameter vector $\paramSet = [D, R] \in \mathbb{R}^{P}$, we use $\systemP[\paramSet]$ to denote an instantiation of a reaction-diffusion system. We use $\vec x(t) \in \mathbb{R}^{K\times K \times N}_+$ to denote the state of system $\systemP[\paramSet]$ at time $t$, and $\stateIJ(t) \in \mathbb{R}^N_+$ to denote the state of system $\systemIJ^{(\paramSet)}$ at time $t$.  
While the model captures the dynamics of concentrations of all species of interest, we assume that a subset $\{n_1,\ldots,n_o\} \subseteq \{1,\ldots,N\}$ of the species is observable through:
\[ H : \mathbb{R}^{K\times K \times N}_+ \rightarrow \mathbb{R}_{[0,b]}^{K\times K \times o} : \qquad \vec y = H(\vec x), \] 
for some $b \in \mathbb{R}_+$. For example, a subset of the genes in a gene network are tagged with fluorescent reporters. The relative concentrations of the corresponding proteins can be inferred by using fluorescence microscopy.  

We are interested in analyzing the observations generated by system~\eqref{eq:rd_system} in steady state. Therefore, we focus on parameters that generate steady state behavior, which can be easily checked through a running average:
\begin{equation}\label{eq:steady_state}
\sum_{i=1}^K \sum_{j=1}^K \sum_{n=1}^N \mid  \stateIJN(t) - \stateIJN \mid  < \epsilon,
\end{equation}
where $\stateIJN  =  \int_{t-T}^T \stateIJN(\tau) d\tau / T$ for some $T \leq t$. The system is said to be in steady state at time $\bar t$, if \eqref{eq:steady_state} holds for all $t \geq \bar t$.
In the rest of the paper, we will simply call the observation of a trajectory at steady state as the {\it observation} of the trajectory, and denote it as $H(\vec x(\bar t))$.

\begin{example} \label{ex:system} 
We consider a $32 \times 32$ reaction-diffusion system with two species ({\em i.e.} $K=32$, $N=2$):
\begin{align} \label{eq:rd_system_p_ex}
 	\frac{d\stateIJN[1]}{dt} &= D_1 \left (\inputIJN[1] - \stateIJN[1] \right) + R_1 \stateIJN[1] \stateIJN[2]- \stateIJN[1] + R_2, \nonumber \\
	\frac{d\stateIJN[2]}{dt} &= D_2 \left (\inputIJN[2] - \stateIJN[2] \right) + R_3 \stateIJN[1] \stateIJN[2] +  R_4 .
\end{align} 
The system is inspired from Turing's reaction-diffusion system and is presented in~\cite{systemApplet} as a model of the skin pigments of an animal. At a cell (location $(i,j)$), the concentration of species 1, $x^{(1)}_{i,j}$, depends on the concentration of species 1 in this cell and in its neighbors (if $D_1 > 0$), and the concentration of species 2 in this cell only, \emph{i.e.} $x^{(2)}_{i,j}$.   Similarly, $x^{(2)}_{i,j}$ depends on the concentration of species 2 in this cell and in its neighbors (if $D_2 > 0$), and $x^{(1)}_{i,j}$ (if $R_3 \neq 0$). 
We assume that species $1$ is observable through mapping $H : \mathbb{R}_+^{32\times32\times 2} \rightarrow \mathbb{R}_{[0,1]}^{32 \times 32}$ given by:
\vspace{-1mm}
 \[ \vec y  =  H(\vec x) , \text{ where } y_{i,j} = \frac { \vec x_{i,j}^{(1)}} { \max_{m,n} \vec x_{m,n}^{(1)} }.\]
We simulate the system from random initial conditions with parameters $\vec R = [1,-12,-1,16]$, and different diffusion parameters $\vec D_1 = [5.6, 24.5]$, $\vec D_2 = [0.2, 20]$, and $\vec D_3 = [1.4, 5.3]$. The observed concentrations of species 1 at different time points are shown in Figure~\ref{fig:exampleSnapshots}. At time $t=50$, all trajectories are in steady state. 
Note that, in all three cases, the spatial distribution of the steady state concentrations of species 1 has some regularity, \emph{i.e.} 
it forms a ``pattern". We will use {\it large spots} (LS), {\it fine patches} (FP), and {\it small spots} (SS) to refer   
to the patterns corresponding to $\vec D_1$, $\vec D_2$, and $\vec D_3$, respectively.  
\end{example}

\begin{figure}[h!]
   \includegraphics[width=0.5\textwidth,natwidth=8.5cm,natheight=6cm]{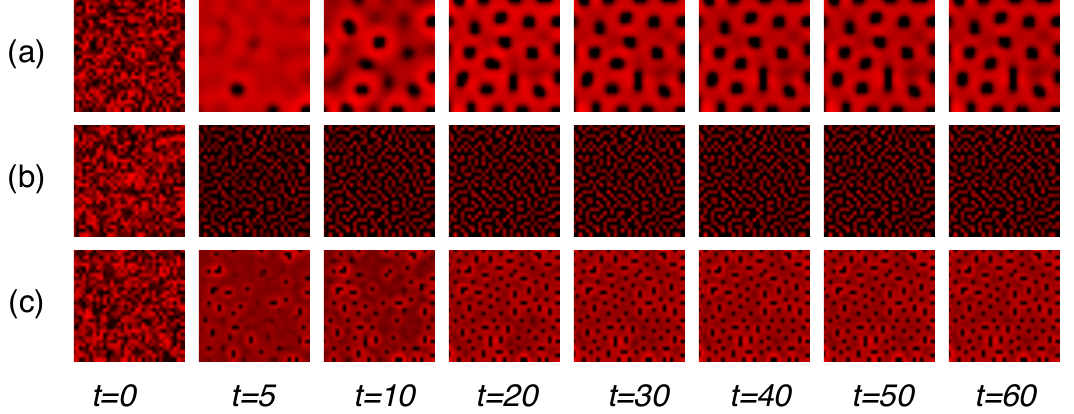} 
   \vspace{-5mm}
\caption{Observations generated by system~\eqref{eq:rd_system_p_ex} with parameters $\vec R$ and (a) $\vec D_1$, (b) $\vec D_2 $, and (c) $\vec D_3$ from Example~\ref{ex:system} (the  concentration of species 1 is represented with shades of red). The steady state observations produce (a) {\it large spots} (LS), (b) {\it fine patches} (FP), and (c) {\it small spots} (SS).}
\label{fig:exampleSnapshots}
\end{figure} 

\begin{problem}\label{prob:main}Given a reaction-diffusion system $\vec S$ as defined in~\eqref{eq:rd_system}, a finite set of initial conditions $\cal{X}_0 \subset \bb{R}^{K \times K \times N}$, ranges of the design parameters $\cal{P} = \cal P_1 \times \ldots \times \cal P_P,$ $\cal P_i \subset \mathbb{R},  i=1,\ldots,P$,  a set of observations $\vec Y_\pattern = \{\vec y_i \}_{i=1,\ldots,N_{\pattern}}$ that contain a desired pattern, a set of observations $\vec Y_{\nopattern}= \{\vec y_i \}_{i=1,\ldots,N_{\nopattern}}$ that do not contain the pattern, find parameters $\paramSetPattern \in \cal P$ such that the trajectories of system  $\systemP[\paramSetPattern]$ originating from $\cal{X}_0$ are guaranteed to produce observations similar to the ones from the set $\vec Y_{\pattern}$.
\end{problem}

\noindent To solve Problem~\ref{prob:main}, we need to perform two steps:
\begin{itemize}
	\item Design a mechanism that decides whether an observation contains a pattern.
	\item Develop a search algorithm over the state space of the design parameters to find $\paramSetPattern$.
\end{itemize}

The first step requires to define a pattern descriptor. To this goal, we develop a new spatial logic over spatial-superposition trees obtained from the observations, and treat the decision problem as a model checking problem. The new logic and the superposition trees are explained in Section \ref{sec:logic}. Then, finding a pattern descriptor reduces to finding a formula of the new logic that specifies the desired pattern. We employ machine-learning techniques to learn such a formula from the given sets of observations $\vec Y_{\pattern}$ and $\vec Y_{\nopattern}$. 

The second step is the synthesis of parameters $\paramSetPattern$ such that the observations produced by the corresponding reaction-diffusion system $\systemP[\paramSetPattern]$ satisfy the formula learned in the first step. To this end, we introduce quantitative semantics for the new logic, which assigns a positive valuation only to the superposition-trees that satisfy the formula. This quantitative valuation is treated as a measure of satisfaction, and is used as the fitness function in a particle swarm optimization (PSO) algorithm. The choice of PSO is  motivated by its inherent distributed nature, and its ability to operate on irregular search spaces, \emph{i.e.} it does not require a differentiable fitness function.  
Finally, we propose a supervised, iterative procedure to find $\paramSetPattern$ that solves Problem~\ref{prob:main}. The procedure involves iterative applications of steps one and two, and an update of the set $\vec Y_{\nopattern}$ until a parameter set that solves Problem~\ref{prob:main} is found, which is decided by the user.  


\section{TREE SPATIAL SUPERPOSITION LOGIC}
\label{sec:logic}

\subsection{Quad-tree spatial representation}

We represent the observations of a reaction-diffusion system  as  a matrix
$\mathcal{A}_{k,k}$ of $2^k \times 2^k$ elements  $a_{i,j}$ with $k \in \mathbb{N}_{>0}$.
Each element corresponds to a small region in the space and 
 is defined as a tuple $a_{i,j} = \tuple{a^{(1)}_{i,j}, \cdots, a^{(o)}_{i,j}}$ of values 
representing the concentration of the observable species within an 
interval $a^{(c)}_{i,j}  \in [0, b]$, with  $b \in \mathbb{R}_{+}$.
Given a matrix $\mathcal{A}_{k,k}$,  we use $\mathcal{A}_{k,k}[i_s,i_e; j_s,j_e]$ to 
denote the sub-matrix formed by selecting the rows with indices from 
$i_s$ to $i_e$ and the columns with indices from 
$j_s$ to $j_e$.

 \begin{definition}\label{def1}  A quad-tree $Q = (V,R)$ is a quaternary 
 tree~\cite{Finkel1974} representation of $\mathcal{A}_{k,k}$ 
where each vertex $v \in V$ represents a sub-matrix of $\mathcal{A}_{k,k}$ and
the relation $R \subset V \times V$  defines the four children of each
node $v$ that is not a leaf. 
A vertex $v$ is a leaf when all the elements of the sub-matrix that it represents 
have the same values.
 \end{definition}
 
 Figure~\ref{fig:quadtree} shows an example of a quadtree,
where node $v_0$ represents the entire matrix; child $v_1$ represents the sub-matrix
$\{1, \cdots, 2^{k-1}\} \times \{1, \cdots, 2^{k-1}\} $; child $v_7$ represents 
the sub-matrix $\{2^{k-2}+1, \cdots, 2^{k-1}\}  \times \{2^{k-2}+1, \cdots, 2^{k-1}\}$; etc. In Figure~\ref{fig:quadtree}, we also label each edge in the quad-tree with the direction 
of the sub-matrix represented by the child: north west (NW), 
north east (NE), south west (SW), south east (SE).

 \begin{figure}[htbp]
   \centering
   \includegraphics[width=8.5cm,,natwidth=8.5cm,natheight=6cm]{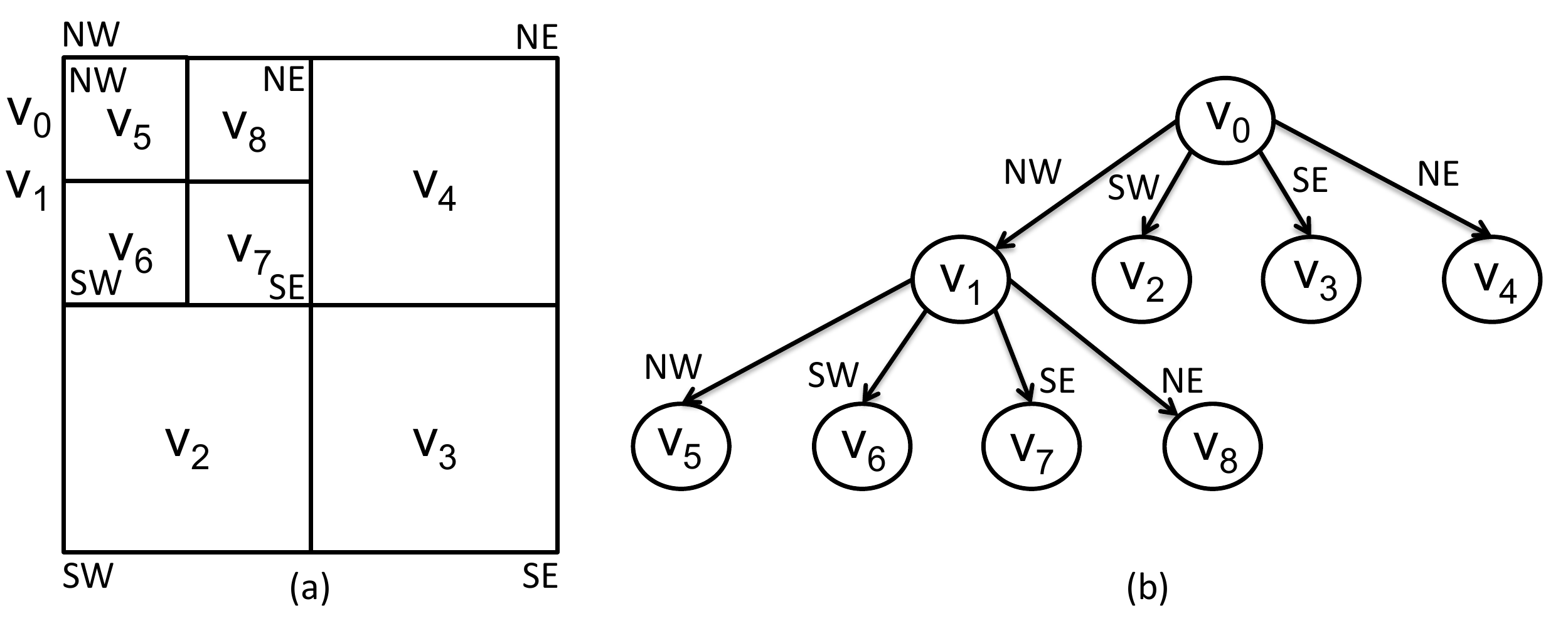} 
   \caption{Quad-tree representation (b) of a matrix (a).}
   \label{fig:quadtree}
\end{figure}
 
 \vspace{-5mm}
 
  \begin{definition}\label{def2}
  We define the mean  function $\mu_c : V \rightarrow [0,b]$ for 
 sub-matrix $\mathcal{A}_{k,k}[i_s,i_e; j_s,j_e]$
  represented by the vertex $v \in V$ of the quad-tree $Q = (V,R)$ as follows:
 $$\mu_c(v) = \frac{1}{(i_e - i_s + 1) (j_e-j_s +1)} \sum_{i,j \in \{i_s, \cdots, i_e\} \times  \{j_s, \cdots, j_e\}  } a^{(c)}_{i,j}$$

 \end{definition}
 
 \def\dom{\mathop{\rm dom}\nolimits}

\begin{table}[!ht]
\label{list:algorithm}
\begin{tabular*}{8.5cm}{rl}
\hline
\noalign{\vskip 0.3em}
\multicolumn{2}{l}{\textbf{Algorithm} \textsc{BuildingQuadTransitionSystem}} \\
\noalign{\vskip 0.3em}
\hline
\noalign{\vskip 0.3em}
\multicolumn{2}{l}{\textbf{Input}: {\hspace*{0.30cm}}\footnotesize{Matrix $\mathcal{A}_{k,k}$ of $2^k \times 2^k$ of elements $a_{i,j} = \tuple{a^{(1)}_{i,j}, \cdots, a^{(o)}_{i,j}}$,}}\\ 
\multicolumn{2}{l}{{\hspace*{1.36cm}}\footnotesize{its quad-tree  $Q\,{=}\,(V,R)$, the root $v_0 \in V$, and a labeling}}\\ 

\multicolumn{2}{l}{ {\hspace*{1.35cm}}\footnotesize{function $LQ : R \rightarrow \mathcal{D}=\{NW, NE, SE, SW\}$}}\\ 
\multicolumn{2}{l}{\textbf{Output}:\hspace*{0.05cm}~\footnotesize{Quad Transition System $\mathcal{Q}_{TS} = ( S, s_\iota, \tau, \Sigma, [.] , L)$}} \\
\noalign{\vskip 0.5em}
 1: & \verb| |\footnotesize{$\Sigma := \{ m_1, \cdots, m_o \}$} \hspace*{0.1cm}  $\triangleright$ \scriptsize{Initialize the set of variables $\Sigma$ of $\mathcal{Q}_{TS}$}.  \\
 2: & \verb| |\footnotesize{$\tau= \emptyset$}  \hspace*{0.1cm} $\triangleright$ \scriptsize{Initialize the set $\tau$ of the transition relation $\tau$ of $\mathcal{Q}_{TS}$}. \\
 3: & \verb| |\footnotesize{$S := \{s_{\iota} \} $}  \hspace*{0.1cm} $\triangleright$ \scriptsize{Initialize the set of states $S$ of $\mathcal{Q}_{TS}$}.  \\
 4: & \verb| |\footnotesize{$TS := \{ \tuple{s_{\iota}, \{ v_0 \} } \} $} \hspace*{0.1cm}  \\
  \multicolumn{2}{l}{ {\hspace*{1.0cm}} $\triangleright$  \scriptsize{Each tuple in TS contains a state in S and a set of vertices in V.}}\\ 
  
 5: & \verb| |\footnotesize{$LF := \{ v \in V | \not \exists t \in V: (v,t) \in R\}$} \hspace*{0.10cm} $\triangleright$  \scriptsize{LF is the set of leaves of Q}  \\
 6: & \verb| |\footnotesize{$PLF := \{ P_i \subseteq LF, 1 \leq i \leq n$ $|$ $P_i \neq \emptyset \wedge \forall v_a,v_b \in P_i, $} \\
 \multicolumn{2}{l}{ {\hspace*{4.5cm}}\footnotesize{$\forall v_c \in P_{j \neq i}, v_a \equiv v_b \wedge v_a \not \equiv v_c \}$}}\\ 
   \multicolumn{2}{l}{ {\hspace*{1.0cm}} $\triangleright$  \scriptsize{PLF is a partition of LF with equivalent leaves. }}\\ 
   
 7:& \verb| |\footnotesize{\textbf{for each} $\hat{P} \in PLF $ \textbf{do}}  \hspace*{0.1cm} \\
  \multicolumn{2}{l}{ {\hspace*{1.0cm}} $\triangleright$  \scriptsize{For each partition element, create a state $s'$ with a self-loop and}}\\ 
\multicolumn{2}{l}{ {\hspace*{1.0cm}} $\triangleright$  \scriptsize{a transition to the state $s_{\iota}$ if $\hat{P}$ contains a child of $v_0$.}}\\ 
8:& \verb|   |\footnotesize{add new state $s'$ to $S$ and a tuple $\tuple{s',\hat{P}}$ to $TS$}\\
9:&  \verb|   |\footnotesize{$\tau := \tau \cup \{ (s',s') \} \cup \{(s,s') : \tuple{s, VS} \in TS,$}\\ 
 \multicolumn{2}{l}{ {\hspace*{4.5cm}}\footnotesize{ $\exists v \in VS, \exists v' \in \hat{P}: (v,v') \in R \}$}}\\ 

10:& \verb| |\footnotesize{\textbf{end for}} \\
11: & \verb| |\footnotesize{$FS := \{ v \in V | (v_0,v) \in R \} \backslash LF$}   \hspace*{0.1cm}  \\
 \multicolumn{2}{l}{ {\hspace*{1.0cm}} $\triangleright$  \scriptsize{explore the children of $v_0$ that are not leaves.}}\\ 

12: & \verb| |\footnotesize{\textbf{while} $FS \neq \emptyset $ \textbf{do} \hspace*{0.1cm}}  $\triangleright$  \scriptsize{FS contains the frontier vertices to be explored.} \\
13: & \verb|   |\footnotesize{$LFS := \{ v \in FS$ $|$ $\forall v' \in V : (v,v') \in R: $} \\
 \multicolumn{2}{l}{ {\hspace*{4.5cm}}\footnotesize{ $\exists \tuple{s,VS} \in TS \wedge v' \in VS \}$}}\\

14: & \verb|   |\footnotesize{$PLFS := \{ P_{i \in I} \subseteq LFS$ $|$ $I \neq \emptyset, P_i \neq \emptyset, \forall v_a,v_b \in P_i,$} \\
 \multicolumn{2}{l}{ {\hspace*{4.1cm}}\footnotesize{ $ \forall v_c \in P_{j \neq i}, v_a \equiv v_b \wedge v_a \not \equiv v_c \}$}}\\ 

 15:& \verb|   |\footnotesize{\textbf{for each} $\hat{P} \in PLFS $ \textbf{do}} \\
 16:& \verb|     |\footnotesize{add new state $s'$ to $S$ and a tuple $\tuple{s',\hat{P}}$ to $TS$}\\
 17:&  \verb|     |\footnotesize{$\tau:=  (\bigcup_{s : \tuple{s,VS} \in TS: \exists v \in \hat{P},  \exists v' \in  VS, (v,v') \in R} (s',s)) \cup \tau $}\\
18:&  \verb|     |\footnotesize{\textbf{if} $\exists v \in \hat{P} \wedge \exists \tuple{s,VS}: \exists v' \in VS \wedge (v', v) \in R$ \textbf{then}}\\
19:&  \verb|       |\footnotesize{$\tau := \tau \cup \{ (s,s') \} $}\\
20:&  \verb|     |\footnotesize{\textbf{end if}} \\
21:& \verb|   |\footnotesize{\textbf{end for}} \\
 22:& \verb|   |\footnotesize{\textbf{for each} $\hat{v} \in FS\backslash LFS $ \textbf{do} }\\
  23:& \verb|     |\footnotesize{add new state $s'$ to S and a tuple $\tuple{s',\{ \hat{v} \}}$ to TS}\\
   24:&  \verb|     |\footnotesize{$\tau:=  (\bigcup_{s : \tuple{s,VS} \in TS:  \exists v' \in  VS, (\hat{v},v') \in R} (s',s)) \cup \tau $}\\
25:&  \verb|     |\footnotesize{\textbf{if} $\exists \tuple{s,VS}: \exists v' \in VS \wedge (v', \hat{v}) \in R$ \textbf{then}}\\
26:&  \verb|       |\footnotesize{$\tau := \tau \cup \{ (s,s') \} $}\\
27:&  \verb|     |\footnotesize{\textbf{end if}} \\
28:& \verb|   |\footnotesize{\textbf{end for}} \\
29: & \verb|   | \footnotesize{$FS := \{ v \in V $ $|$ $\exists \bar{v} \in FS, (\bar{v},v) \in R \} \backslash LF $ }  \\
30: & \verb| |\footnotesize{\textbf{end while}} \\
31: & \verb| |\footnotesize{\textbf{define func} $[.]$ as $[\bar{c}](\bar{s}) := \mu_{\bar{c}}(v_{\bar{s}})$, $\bar{c} \in \{1,\cdots,o\}, $}  \\
 \multicolumn{2}{l}{ {\hspace*{4.1cm}}\footnotesize{ $v_{\bar{s}} \in VS: \tuple{\bar{s},VS} \in TS$}}\\ 
32: & \verb| |\footnotesize{\textbf{define func} $L$ as $L(s,t) := (t = s) ? \mathcal{D} :  $}\\
 \multicolumn{2}{l}{ {\hspace*{3.4cm}}\footnotesize{ $\bigcup_{\tilde{v} \in \tilde{VS}, \bar{v} \in \bar{VT}    :\tuple{s,\tilde{VS}},\tuple{t,\bar{VT}} \in TS, (\tilde{v}, \bar{v}) \in R } LQ(\tilde{v}, \bar{v})$}}\\ 

33: & \verb| |\footnotesize{\textbf{return} $S, s_\iota, \tau, \Sigma, [.] , L$}\\

\noalign{\vskip 0.3em}
\hline
\end{tabular*}
\end{table}
\label{sec:algorithm}
 
 \noindent The function $\mu_c$ provides the expected value for an observable variable with index  
 $c, 1 \leq c \leq o$ in a particular region of the space represented by the vertex $v$.

 \begin{definition}\label{def3}
Two vertices $v_a,v_b \in V$ are said to be equivalent when the mean function applied 
to the elements of the sub-matrices that they represent produce the same values:

$$v_a \equiv v_b  \Longleftrightarrow \mu_c(v_a) = \mu_c(v_b), \forall c, 1 \leq c \leq o$$

 \end{definition}

We use the mean of the concentration of the observable species  as a spatial 
abstraction (superposition) of  the observations in a particular region of the system, 
avoiding in this way to enumerate the observations of all locations. 
This approach is inspired 
by previous papers~\cite{grosu2009learning,Kwon2006}, where the authors aim to combat the 
state-explosion problem that would stem otherwise.

\begin{proposition}\label{prop1}
Given a vertex $v \in V$ of a quad-tree $Q = (V,R)$ and its four children $v_{NE}, v_{NW}, v_{SE}, v_{SW}$
the following property holds:
$$\mu_c(v) = \frac{\mu_c(v_{NE}) + \mu_c(v_{NW}) + \mu_c(v_{SE}) + \mu_c(v_{SW})}{4}$$ 
\end{proposition}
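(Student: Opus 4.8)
The plan is to reduce the identity to the additivity of a finite sum over a partition of the index set, using the fact that the four children of a non-leaf vertex cut its sub-matrix into four congruent quadrants of equal cardinality. Let $v$ represent $\mathcal{A}_{k,k}[i_s,i_e; j_s,j_e]$. Every vertex of the quad-tree represents a square sub-matrix whose side is a power of two, and since $v$ has four children it is not a leaf; a single element is always a leaf (its lone entry is trivially equal to itself), so the side $s := i_e - i_s + 1 = j_e - j_s + 1$ satisfies $s = 2^r$ with $r \geq 1$. In particular $s$ is even, so the half-way indices $p := i_s + s/2 - 1$ and $q := j_s + s/2 - 1$ split the rows and columns exactly in half.

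First I would make the partition explicit. The four children $v_{NW}, v_{NE}, v_{SW}, v_{SE}$ correspond to the four index blocks
\begin{align*}
I_{NW} &= \{i_s,\ldots,p\} \times \{j_s,\ldots,q\}, & I_{NE} &= \{i_s,\ldots,p\} \times \{q+1,\ldots,j_e\}, \\
I_{SW} &= \{p+1,\ldots,i_e\} \times \{j_s,\ldots,q\}, & I_{SE} &= \{p+1,\ldots,i_e\} \times \{q+1,\ldots,j_e\},
\end{align*}
which are pairwise disjoint and whose union is the whole index set of $v$. Each block contains exactly $(s/2)^2 = s^2/4$ elements.

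Next I would apply Definition~\ref{def2} to each child. For every direction $d \in \{NW,NE,SW,SE\}$ the denominator appearing in $\mu_c(v_d)$ is precisely the cardinality $s^2/4$ of $I_d$, so
$$\sum_{(i,j)\in I_d} a^{(c)}_{i,j} = \frac{s^2}{4}\,\mu_c(v_d).$$
Summing over the four directions and using that the $I_d$ partition the index set of $v$ gives, for the numerator of $\mu_c(v)$,
$$\sum_{(i,j)} a^{(c)}_{i,j} = \frac{s^2}{4}\bigl(\mu_c(v_{NE}) + \mu_c(v_{NW}) + \mu_c(v_{SE}) + \mu_c(v_{SW})\bigr),$$
where the outer sum runs over all cells of $v$. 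Dividing by the parent cardinality $(i_e-i_s+1)(j_e-j_s+1) = s^2$, as prescribed by Definition~\ref{def2}, the factor $s^2$ cancels and yields exactly the claimed arithmetic mean. Since this sum is symmetric in the four children, the precise NW/NE/SW/SE-to-block correspondence is irrelevant; only the partition-into-four-equal-parts property is used.

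The one step deserving care is the equal-cardinality claim, namely that each child carries exactly one quarter of the parent's cells. This is what allows the plain unweighted average of the four child means to reproduce the parent mean; for unequal quadrants one would instead obtain a cell-count-weighted average. It rests entirely on the square, power-of-two (dyadic) structure enforced by the quad-tree over a $2^k \times 2^k$ matrix, so I would establish that structural fact at the outset and flag that the identity would fail for ragged or non-dyadic partitions.
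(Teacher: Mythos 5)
Your proof is correct and follows essentially the same route as the paper, which simply states that the identity ``can be easily derived by expanding the terms of Definition~\ref{def2}''; your argument is precisely that expansion, carried out in full via the equal-cardinality partition of the parent's index set into its four dyadic quadrants. The extra care you take with the quarter-cardinality fact is exactly the point that makes the unweighted average work, so nothing is missing.
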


\begin{proof} The proof can be easily derived by expanding the terms of Definition~\ref{def2}.
\end{proof}

\begin{proposition}\label{prop2} 
The number of vertices needed for the quad-tree representation $Q = (V, R)$ of a matrix $\mathcal{A}_{k,k}$ is upper bounded by $\sum^{ k}_{i=0} 2^{2i}$.
\end{proposition}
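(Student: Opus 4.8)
The plan is to bound the tree level by level, grouping the vertices by their depth and counting how many can occur at each depth. First I would record the two structural facts supplied by Definition~\ref{def1}: the root $v_0$ represents the full $2^k \times 2^k$ matrix, and every vertex that is not a leaf is split into exactly four children, each representing one of the four square quadrants of its sub-matrix. Because $2^k$ is a power of two, halving the side length at each subdivision always produces integer dimensions, so a vertex at depth $i$ (with the root at depth $0$) represents a square sub-matrix of size $2^{k-i} \times 2^{k-i}$.

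Next I would count by depth. Let $n_i$ denote the number of vertices at depth $i$. Then $n_0 = 1$, and since each vertex at depth $i$ is either a leaf (no children) or internal (exactly four children), we have $n_{i+1} \le 4\, n_i$. A one-line induction then gives $n_i \le 4^i = 2^{2i}$. Equality holds precisely when the tree is full down to level $i$; leaves that terminate branches early can only reduce the count, which is exactly why the claim is an upper bound rather than an equality.

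Then I would argue that the depth can never exceed $k$. A vertex at depth $k$ represents a $1 \times 1$ sub-matrix, whose single entry makes all the elements of that sub-matrix trivially equal; by the leaf condition in Definition~\ref{def1}, such a vertex is necessarily a leaf and therefore has no children. Hence no vertex lies at depth greater than $k$, and summing the per-level bounds yields $|V| = \sum_{i=0}^{k} n_i \le \sum_{i=0}^{k} 2^{2i}$, as desired.

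The argument is entirely elementary, and the only point requiring genuine care is the termination claim. The step I expect to be the main (though still minor) obstacle is justifying that the recursion must halt at depth $k$ — that is, confirming the leaf condition of Definition~\ref{def1} is automatically satisfied once a sub-matrix degenerates to a single element, so that the quaternary branching cannot continue indefinitely and the level sum remains finite.
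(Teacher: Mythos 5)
Your proof is correct and follows essentially the same route as the paper's: both arguments reduce to observing that the vertex count is maximized by a full, complete quaternary tree of depth $k$, whose levels contribute $4^i = 2^{2i}$ vertices each. Your version merely makes explicit the two facts the paper leaves implicit — the per-level bound $n_i \leq 4^i$ via induction on the branching factor, and the termination of subdivision at depth $k$ where $1 \times 1$ sub-matrices are necessarily leaves.
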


\begin{proof} The proof follows from the fact that  the worst case scenario is when all the elements 
have different values. In this case the cardinality of the set $V$ is equal to the cardinality of a full and complete quaternary 
tree. 
For example, to represent the matrix $\mathcal{A}_{3,3}$,  it would require a max number of vertices $|V| \leq 1 + 4 + 16 + 64 = 85$. 
\end{proof}

 \begin{figure}[htbp]
   \centering
   \includegraphics[width=8.5cm,natwidth=8.5cm,natheight=6cm]{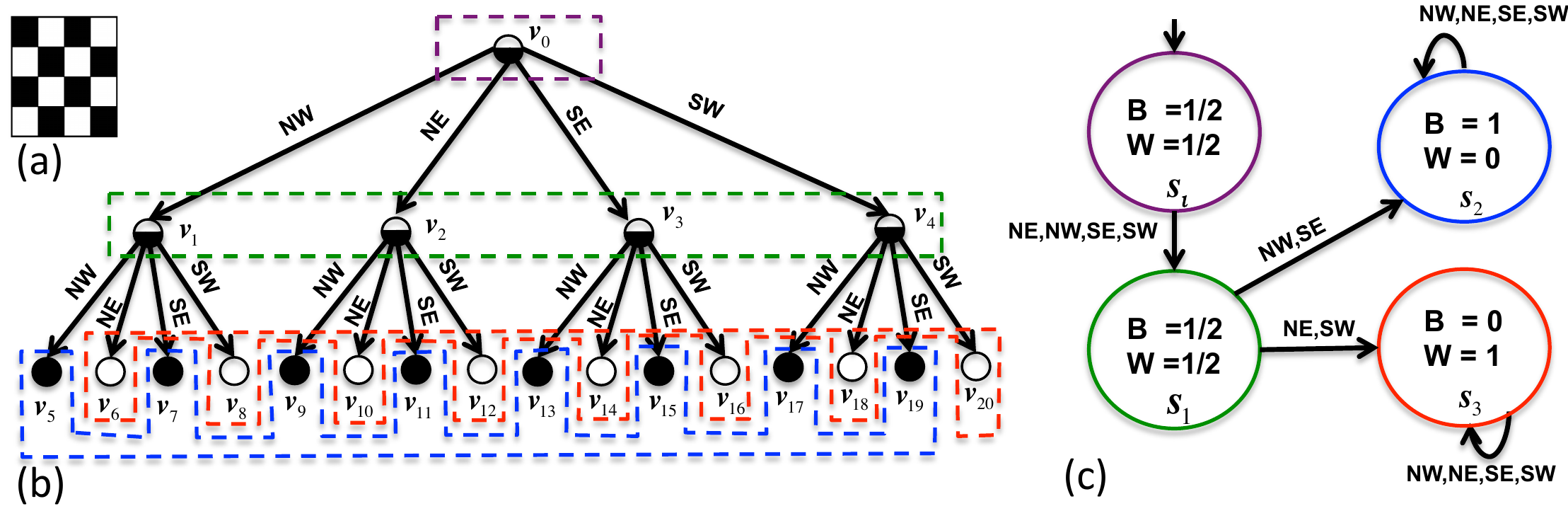} 
   \caption{A checkerboard pattern as a matrix of pixels (a), the quad-tree representation (b) and the derived quad transition system (c), where $\textbf{B}$ and $\textbf{W}$ denote black and white, respectively.}
   \label{fig:quadtree1}
\end{figure}

\subsection{Quad Transition System}

We now introduce the notion of quad transition system that extends the classical 
quad-tree structure, allowing for a more compact exploration for model checking.

\begin{definition} A \emph{Quad Transition System} (QTS) is a tuple $\mathcal{Q}_{TS} = ( S, s_\iota, \tau, \Sigma, [.] , L)$,
where:
\begin{enumerate}
\item $S$ is a finite set of states with $s_\iota \in S $ the initial state;
\item $\tau \subseteq S \times S $ is the transition relation. We require $\tau$ to be non-blocking and bounded-branching:
$\forall s \in S, \exists t \in S:  (s,t) \in \tau$ and $\forall s \in S$, if $T(s)=\{t : (s,t) \in \tau\}$ is the set of all successors
of $s$, the cardinality of  $|T(s)| \leq 4$;
\item $\Sigma$ is a finite set of variables;
\item $[.]$ is a function $[.] : S \rightarrow (\Sigma \rightarrow [0, b])$ that assigns to each state $s \in S$
and a variable $m \in \Sigma$ a rational value $[s](m)$ in $[0,b]$ with $b \in \mathbb{R}_{+}$;
\item $L$ is a labeling function for the transition $L: \tau \rightarrow 2^\mathcal{D}$ with $\mathcal{D}=\{NW, NE, SE, SW\}$ and with the 
 property that $\forall (s,t), (s,t') \in \tau$, with $t \neq t' $ it holds that $L(s,t) \cap L(s,t') = \emptyset $, $\bigcup_{\forall t \in S: (s,t) \in \tau} L (s,t) = \mathcal{D}$.

\end{enumerate}
\end{definition}

The \textsc{BuildingQuadTransitionSystem} algorithm shows how to generate a QTS starting 
from a quad-tree representation $Q\,{=}\,(V,R)$ of a a matrix $\mathcal{A}_{k,k}$
and a labeling function  $LQ : R \rightarrow \mathcal{D}$.

\begin{proposition}\label{prop3} 
A \emph{quad transition system} (QTS) $\mathcal{Q}_{TS} = ( S, s_\iota, \tau, \Sigma, [.] , L)$ generated by the  \textsc{BuildingQuadTransitionSystem} algorithm
has always a least fixed point, that is $\exists s \in S : (s, s) \in \tau $.
\end{proposition}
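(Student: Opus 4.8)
The plan is to read the self-loop directly off the first iteration of the construction, so that the proof reduces to a non-emptiness argument rather than any genuine fixed-point computation. First I would observe that in the \textsc{BuildingQuadTransitionSystem} algorithm, line~9 \emph{unconditionally} inserts the pair $(s',s')$ into $\tau$ for every state $s'$ created in the loop of lines~7--10. Hence, to establish $\exists s \in S : (s,s) \in \tau$, it suffices to show that this loop executes at least once, i.e. that the partition $PLF$ built in line~6 is non-empty.

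Next I would argue that the leaf set $LF$ computed in line~5 is always non-empty, from which non-emptiness of $PLF$ follows immediately: $PLF$ is by construction a partition of $LF$ into $\equiv$-classes, and a partition of a non-empty set has at least one block. To see $LF \neq \emptyset$, I would use that the quad-tree $Q=(V,R)$ of $\mathcal{A}_{k,k}$ is finite (its vertex count is bounded by Proposition~\ref{prop2}) and that, by Definition~\ref{def1}, every non-leaf vertex has exactly four children. Starting at the root $v_0$ and repeatedly descending to a child therefore cannot continue indefinitely, so the descent must terminate at a leaf; in the extreme case the decomposition bottoms out at single-element sub-matrices, which are trivially homogeneous and hence leaves by Definition~\ref{def1}. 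Thus at least one leaf exists, $LF \neq \emptyset$, and consequently $PLF \neq \emptyset$.

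Combining the two steps closes the argument: since $PLF \neq \emptyset$, the loop of lines~7--10 runs for at least one partition element $\hat{P}$, creating a state $s'$ for which line~9 adds $(s',s')$ to $\tau$; therefore $\exists s \in S : (s,s) \in \tau$, as claimed.

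I do not expect a real obstacle here. The only points meriting a sentence of care are, first, matching the informal phrase ``least fixed point'' to the formal statement $\exists s \in S:(s,s)\in\tau$, and second, checking that the self-loop is added unconditionally in line~9 rather than inside a guarded branch (contrast lines~18--20 and~25--27, whose additions are conditional and would not on their own guarantee a self-loop). Once the observation that $PLF$ partitions a non-empty set is pinned down, the conclusion is immediate.
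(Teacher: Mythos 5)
Your proposal is correct and follows essentially the same route as the paper's own (one-sentence) proof: the self-loop comes from the states created for the equivalence classes of leaves in lines~7--10, where line~9 adds $(s',s')$ unconditionally. The only addition is your explicit check that the leaf set $LF$ (and hence the partition $PLF$) is non-empty, a point the paper leaves implicit; this is a reasonable bit of extra rigor but not a different argument.
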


\begin{proof} This property holds because 
the algorithm generates  a state with a self-loop transition for each partition of equivalent leaves in the quad-tree. 
\end{proof}

\begin{definition}[\textbf{Labeled paths}]  Given a set $B$ of labels representing the spatial directions, a \emph{labeled path} (lpath) of a QTS $\mathcal{Q}$ is
an infinite sequence $\pi^B= s_0 s_1 s_2 \cdots $ of states such that $(s_i,s_{i+1}) \in \tau$  $\wedge$ $L(s_i,s_{i+1}) \cap B \neq \emptyset$, $\forall i \in \mathbb{N}$.  Given a state $s$, we denote $LPaths^B(s)$ the set of all labeled paths starting in $s$, and with $\pi^B_i$
the $i$-th element of a path $\pi^B \in LPaths^B(s)$.
For example, in Figure~\ref{fig:quadtree1}, $LPaths^B(s_{\iota}) = \{ s_{\iota} s_1 s_2 s_2\cdots \} $ if $B=\{NW,SE\}$.   
\end{definition}
\vspace*{-3.0ex}
\subsection{TSSL Syntax and Semantics}

\begin{definition} [\bf TSSL syntax] The syntax of TSSL is defined as follows:
\footnotesize
$$\varphi ::= \top \:| \: \bot \: | \:  m \sim d \: | \: \neg \varphi \:|\: \varphi_{1} \wedge \varphi_{2} \:|\:  \exists_{B} \bigcirc \: \varphi \:| \forall_{B}\bigcirc \: \varphi \:|  \exists_{B}  \:\varphi_{1} \:  \mathcal{U}_k \: \varphi_{2}  \:| \forall_{B} \:\varphi_{1} \:  \mathcal{U}_k \: \varphi_{2}$$
\normalsize
\noindent with $ \sim \in \{ \: \leq \:, \: \geq \}$, $d \in [0,b]$, $b \in \mathbb{R}_{+}$, $k \in \mathbb{N}_{>0}$, $B \subseteq \mathcal{D} : B \neq \emptyset$, and $m \in \Sigma$, with $\Sigma$ the set of variables.
\normalsize
\end{definition}

From this basic syntax one can derive other two temporal operators:  the {\it exist eventually} 
operator $\exists_{B}  F_k $, the {\it forall eventually}  operator $\forall_{B}  F_k$, 
the {\it exist globally} 
operator $\exists_{B}  G_k $, and the {\it forall globally}  operator $\forall_{B}  G_k$
defined such that:

\footnotesize
\begin{align}
 \exists_{B} F_k \varphi := \exists_{B} \top \:  \mathcal{U}_k \:  \varphi  \:  \:  \:  \: \:  \:  \:  \:  \exists_{B} G_k \:  \varphi := \neg \forall_{B} F_k \neg \varphi.\nonumber \\
 \forall_{B} F_k \varphi := \forall_{B} \top \:  \mathcal{U}_k \:  \varphi  \:  \:  \:  \: \:  \:  \:  \:  \forall_{B} G_k \:  \varphi := \neg \exists_{B} F_k \neg \varphi. \nonumber 
\end{align}
\normalsize

\noindent 
The TSSL logic resembles the classic CTL logic~\cite{Clarke1982}, with the main 
difference that the \emph{next} and \emph{until} are not 
temporal, but spatial operators  meaning a change 
of resolution (or zoom in).  The set $B$  selects the 
spatial directions in which the operator is allowed to work 
and the parameter $k$ limits the \emph{until} to operate 
on a finite sequence of states. 
In the following we provide the TSSL qualitative semantics that,
given a spatial model and a formula representing the pattern 
to detect, provides a yes/no answer. 

\begin{definition} [\bf TSSL Qualitative Semantics] Let   $\mathcal{Q} = ( S, s_\iota, \tau, \Sigma, [.] , L)$ be a QTS,
Then, $\mathcal{Q}$ \emph{satisfies a TSSL formula $\varphi$},
written $\mathcal{Q} \models  \varphi$, if and only if $\mathcal{Q}, s_\iota \models \varphi$, where:
\scriptsize

$$
\begin{array}{*{10}c}
   \begin{array}{l}
  
  \mathcal{Q}, s  \models \top  \\
  \mathcal{Q}, s  \models  m \sim d \\
  \mathcal{Q}, s   \models \neg \varphi  \\
  \mathcal{Q}, s  \models  \varphi_{1} \wedge \varphi_{2}   \\
  \mathcal{Q}, s  \models  \exists_{B} \bigcirc  \: \varphi \\
  \mathcal{Q}, s   \models  \forall_{B} \bigcirc\: \varphi \\
  \mathcal{Q}, s   \models \exists_{B} \varphi_{1} \: \mathcal{U}_k \: \varphi_{2} \\
  \\
    \mathcal{Q}, s  \models \forall_{B} \varphi_{1} \: \mathcal{U}_k \: \varphi_{2} \\
\\
  \end{array} &
     \begin{array}{c}
   \mbox{and} \\
  \Leftrightarrow \\
  \Leftrightarrow \\
 \Leftrightarrow \\
   \Leftrightarrow \\
   \Leftrightarrow \\
\Leftrightarrow \\
\\
\Leftrightarrow \\
\\

  \end{array}   &
       \begin{array}{l}
   Q, s \not \models \bot \\
  \mbox{[}s\mbox{]}(m) \sim d\\
     \mathcal{Q}, s  \not \models \varphi \\
\mathcal{Q}, s  \models  \varphi_{1} \wedge \mathcal{Q}, s   \models  \varphi_{2} \\
\exists s' : (s,s') \in \tau \wedge L(s,s') \cap B \neq \emptyset   \wedge  \mathcal{Q}, s' \models \varphi \\
\forall s' : (s,s') \in \tau \wedge L(s,s') \cap B \neq \emptyset   \wedge  \mathcal{Q}, s' \models \varphi \\

\exists \pi^B \in LPaths^B(s) : \exists i, 0 < i \leq  k : \\
  \mbox{\quad \quad}(Q, \pi^B_i \models \varphi_{2})  \wedge (\forall j < i,  (Q, \pi_j \models \varphi_{1}))  \\

\forall \pi^B \in LPaths^B(s) : \exists i, 0 < i \leq  k : \\
\mbox{\quad \quad} (Q, \pi^B_i \models \varphi_{2})  \wedge (\forall j < i,  (Q, \pi_j \models \varphi_{1})) \\
  \end{array}
 \end{array}
$$

\end{definition}

\normalsize

\begin{example} \label{ex:design} \textbf{Checkerboard pattern.}
The checkerboard pattern from Fig~\ref{fig:quadtree1} a) can be characterized with the following TSSL formula ($B^*=\{SW,NE,NW,SE\}$):
\footnotesize{
$$ \forall_{B^*} \bigcirc (   \forall_{B^*} \bigcirc (( \forall_{\{SW,NE\}}  \bigcirc ( m \geq 1)) \wedge (\forall_{\{NW,SE\}}  \bigcirc ( m \leq 0))   )  ).    $$}
\normalsize
The ``eventually" operator can be used to define all the possible checkerboards of different sizes less or equal than $4^2$ as follows:

\footnotesize{
$$ \forall_{B^*} F_2 ( (\forall_{\{SW,NE\}}  \bigcirc ( m \geq 1)) \wedge (\forall_{\{NW,SE\}}  \bigcirc ( m \leq 0))   )    $$}
\end{example}

\noindent The qualitative semantics is useful to check if a 
given spatial model violates or satisfies a pattern expressed in TSSL.
However, it does not provide any information about how much 
the property is violated or satisfied.  This information may 
be useful to guide a simulation-based parameter exploration 
for pattern generation. For this reason we equip our logic 
also with a quantitative valuation that provides a measure 
of satisfiability in the same spirit of~\cite{Donze2011}.
Since the valuation of a TSSL formula with spatial 
operators requires to traverse and to compare regions of space 
at different resolution, we apply a discount factor 
of $\frac{1}{4}$ on the result  each time a transition is taken in QTS.

\begin{definition} [ \bf TSSL Quantitative Semantics] Let   $\mathcal{Q} = ( S, s_\iota, \tau, \Sigma, [.] , L)$ be a QTS. The quantitative  valuation $\llbracket \varphi \rrbracket :  S \rightarrow [-b,b]$ of a TSSL formula $\varphi$ is defined as follows:
\scriptsize
\begin{align}
  \llbracket \top\rrbracket (s) 						&= b\nonumber \\
 \llbracket \bot \rrbracket (s) 						&= -b\nonumber \\
 \llbracket m \sim d \rrbracket (s) 					&= (\sim \mbox{is} \geq ) \mbox{ ? }  ([m](s) - d) : (d - [m](s)) \nonumber \\
  \llbracket \neg \varphi \rrbracket (s) 				&=  -\llbracket \varphi \rrbracket (s)\nonumber \\
\llbracket \varphi_1 \wedge \varphi_2 \rrbracket (s)	&= \min (\llbracket \varphi_1 \rrbracket (s), \llbracket \varphi_2 \rrbracket (s))  \nonumber   \\
 \llbracket \exists_{B} \bigcirc  \: \varphi \rrbracket (s)	&= \frac{1}{4} \max_{\scriptsize{\pi^B \in LPaths^B(s)}} \llbracket  \: \varphi \rrbracket (\pi^B_1)\nonumber \\
 \llbracket \forall_{B} \bigcirc  \: \varphi \rrbracket (s) 	&=  \frac{1}{4} \min_{\scriptsize{\pi^B \in LPaths^B(s)}}  \llbracket  \: \varphi \rrbracket (\pi^B_1)\nonumber \\
   \llbracket \exists_{B} \varphi_{1} \: \mathcal{U}_k \: \varphi_{2} \rrbracket (s) &= \sup_{\scriptsize{\pi^B \in LPaths^B(s)}}   \{ \min(\frac{1}{4^i}\llbracket  \varphi_2 \rrbracket (\pi^B_i), \inf \{\frac{1}{4^j} \llbracket  \varphi_1 \rrbracket (\pi^B_j) \mid  j < i \}) \mid 0 < i \leq k \}\} \nonumber \\
\llbracket \forall_{B} \varphi_{1} \: \mathcal{U}_k \: \varphi_{2} \rrbracket (s) &=\inf_{\scriptsize{\pi^B \in LPaths^B(s)}}  \{ \min(\frac{1}{4^i}\llbracket  \varphi_2 \rrbracket (\pi^B_i), \inf \{\frac{1}{4^j} \llbracket  \varphi_1 \rrbracket (\pi^B_j) \mid j < i  \}) \mid 0 < i \leq k  \} \nonumber
\end{align}
\end{definition}
\normalsize

\begin{theorem}[\textbf{Soundness}]\label{theorem1}  Let   $\mathcal{Q} = ( S, s_\iota, \tau, \Sigma, [.] , L)$ be a QTS, $s \in S$ a state of $\mathcal{Q}$, 
and $\varphi$ a TSSL formula. Then, the following properties hold for the two semantics:

\footnotesize
$$ \llbracket \varphi \rrbracket (s) > 0 \Longrightarrow   \mathcal{Q}, s  \models \varphi $$
$$ \llbracket \varphi \rrbracket (s) < 0 \Longrightarrow   \mathcal{Q}, s  \not \models \varphi $$
\normalsize
\end{theorem}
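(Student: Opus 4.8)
The plan is to prove both implications \emph{simultaneously} by structural induction on $\varphi$. A single joint induction is needed because the two statements are dual under negation: since $\llbracket \neg\varphi \rrbracket(s) = -\llbracket\varphi\rrbracket(s)$ and $\mathcal{Q}, s \models \neg\varphi$ iff $\mathcal{Q}, s \not\models \varphi$, the positive claim for $\neg\varphi$ is exactly the negative claim for $\varphi$, so the $\neg$ case forces us to carry both the positive implication $P(\varphi): \llbracket\varphi\rrbracket(s) > 0 \Rightarrow \mathcal{Q}, s \models \varphi$ and the negative implication $N(\varphi): \llbracket\varphi\rrbracket(s) < 0 \Rightarrow \mathcal{Q}, s \not\models \varphi$ through the induction together. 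The guiding principle throughout is that every scaling factor $\tfrac{1}{4^i}$ is strictly positive, hence sign-preserving, and that $\min,\max,\sup,\inf$ mirror $\wedge,\vee,\exists,\forall$ at the level of signs; so the valuation clauses and the satisfaction clauses are in term-by-term correspondence once one only tracks the sign.

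\emph{Base and Boolean cases.} For $\top$ ($\llbracket\top\rrbracket(s)=b>0$) and $\bot$ ($\llbracket\bot\rrbracket(s)=-b<0$) both claims are immediate. For $m \sim d$ they follow because $[s](m)-d>0 \Rightarrow [s](m)\geq d$ and $[s](m)-d<0 \Rightarrow [s](m)<d$, and symmetrically for $\leq$. For $\neg\psi$, the claims $P(\neg\psi)$ and $N(\neg\psi)$ follow from $N(\psi)$ and $P(\psi)$ respectively, via $\llbracket\neg\psi\rrbracket=-\llbracket\psi\rrbracket$. For $\psi_1\wedge\psi_2$ the valuation is a $\min$, so a positive value forces both conjuncts positive (apply $P$ to each), while a negative value forces at least one conjunct negative (apply $N$); these match the qualitative clause exactly.

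\emph{Modal and until cases.} Here I would use the elementary facts that $\sup X>0$ yields some element $>0$ while $\sup X<0$ forces every element $<0$ (and dually for $\inf$), so no attainment argument is required. For $\exists_B\bigcirc\psi$, with valuation $\tfrac14\max_{\pi^B}\llbracket\psi\rrbracket(\pi^B_1)$, positivity produces a labeled path whose first state $s'=\pi^B_1$ has $\llbracket\psi\rrbracket(s')>0$, hence $\mathcal{Q},s'\models\psi$ by the IH, witnessing the existential; negativity makes $\llbracket\psi\rrbracket$ negative at every admissible successor, so by the IH no $B$-successor satisfies $\psi$. The $\forall_B\bigcirc$ case is symmetric with $\min$. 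For the until operators, whose valuation is $\sup_{\pi^B}\max_{0<i\leq k}\min(\tfrac{1}{4^i}\llbracket\psi_2\rrbracket(\pi^B_i),\inf\{\tfrac{1}{4^j}\llbracket\psi_1\rrbracket(\pi^B_j)\mid j<i\})$, I drop the positive factors and argue by sign: a positive value yields a path and an index $i$ with the $\psi_2$-value positive at $\pi^B_i$ and the $\psi_1$-value positive at all earlier $\pi^B_j$, which via the IH is precisely a qualitative witness for $\exists_B\psi_1\mathcal{U}_k\psi_2$; a negative value makes the inner bracket negative for every path and every $i$, so for each pair $(\pi^B,i)$ either $\pi^B_i\not\models\psi_2$ or some earlier $\pi^B_j\not\models\psi_1$, which negates the until everywhere. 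The $\forall_B$ until swaps the outer $\sup$ for $\inf$ and the roles of ``some path''/``all paths'' accordingly.

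\emph{Main obstacle.} The one genuinely non-syntactic point, which I would isolate as a lemma, is that $LPaths^B(s)\neq\emptyset$ for every $s$ and that $\{\pi^B_1\mid\pi^B\in LPaths^B(s)\}$ coincides with the set of $B$-successors $\{s'\mid(s,s')\in\tau,\ L(s,s')\cap B\neq\emptyset\}$; otherwise the $\max,\min,\sup,\inf$ could range over empty sets and the $\bigcirc$/until correspondences would break. This is exactly where the QTS structure enters: by the labeling axiom $\bigcup_{(s,t)\in\tau}L(s,t)=\mathcal{D}\supseteq B\neq\emptyset$ with the $L(s,t)$ disjoint, every state has at least one outgoing $B$-labeled transition, and by Proposition~\ref{prop3} together with $L(s,s)=\mathcal{D}$ the self-loop states supply admissible infinite continuations; hence every $B$-successor extends to a full labeled path and the two sets agree. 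With this lemma available the inductive cases close and the theorem follows.
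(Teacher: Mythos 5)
Your proof is correct and takes essentially the same approach as the paper: the paper's entire proof of Theorem~\ref{theorem1} is the single remark that it ``can be derived by structural induction on the operational semantics,'' which is precisely the induction you carry out. The details you supply---the joint induction on both implications forced by the $\neg$ case, the sign-preservation of the $\tfrac{1}{4^i}$ factors, and the lemma that $LPaths^B(s)$ is non-empty with its set of first states coinciding with the $B$-successors (a consequence of the labeling axiom $\bigcup_{t : (s,t)\in\tau} L(s,t)=\mathcal{D}$)---are exactly what is needed to make the paper's one-line argument rigorous.
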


\begin{proof} The proof can be derived by structural induction on the operational semantics. 
\end{proof}

\begin{remark} Theorem~\ref{theorem1} provides the basis of the 
techniques for pattern generation discussed in the following sections.
It is worth to note that, in the case $ \llbracket \varphi \rrbracket (s) = 0$, it
is not possible to infer whether $Q$ violates or satisfies a TSSL formula $\varphi$  and
only in this particular case we
need to resort to the qualitative semantics for determining it.
\end{remark}


\newcommand{\dataset}{{\cal L}}

\section{TSSL PATTERN CLASSIFIERS}\label{sec:learning}

A QTS can be seen in the context of multi-resolution representation, since the nodes that appear at deeper levels provide information for higher resolutions. Therefore, a TSSL formula can effectively capture properties of an image. However, it is difficult to write a formula that describes a desired property, such as a pattern. Here, we propose to use machine-learning techniques to find such a formula from given sets of positive ($\vec Y_{\pattern}$) and negative ($\vec Y_{\nopattern}$) examples.

We first define a labeled data set from the given data sets  $\vec Y_{\pattern}$ and $\vec Y_{\nopattern}$ as 
\[ \dataset = \{ (\cal{Q}_{\vec y}, \pattern) \mid \vec y \in  Y_{\pattern}\}  \cup  \{ (\cal{Q}_{\vec y}, \nopattern) \mid \vec y \in  Y_{\nopattern}\},\]
where $\cal{Q}_{\vec y}$ is the QTS generated from $\vec y$. Then, we separate the data set $\dataset$ into disjoint training and testing sets $\dataset_{L}, \dataset_{T}$.  
In machine-learning, the training set is used to learn a classifier for a target class, \emph{e.g.} $\pattern$,  and the testing set is used to measure the accuracy of the classifier.  We employ RIPPER~\cite{Cohen95fasteffective}, a rule based learner, to learn a classifier from $\dataset_{L}$, and then translate the classifier into a TSSL formula characterizing $\pattern$. Each rule obtained from the learning algorithm is described as \[r_i : C_i \Rightarrow \sim_i,\] where $C_i$ is a boolean formula over  linear predicates over the variables of the states of a QTS, {\em e.g.} $\mbox{[}s\mbox{]}(m) > d$, and $\sim_i$ takes values from the label set $\{\pattern, \nopattern\}$. A linear predicate for a state $s \in S$ can be written as a TSSL formula via the QTS path from the root $s_\iota$ to $s$. Therefore, each $C_i$ can be translated into an equivalent TSSL formula $\Phi_i$. 
The classification rules are interpreted as nested if-else statements. Hence, a logically equivalent TSSL formula for the desired property is defined as follows:
\begin{equation}\label{eq:learnedFormula}
	\Phi_{\pattern} : =  \bigvee_{j \in R_\pattern} \left ( \Phi_j \wedge  \bigwedge_{i=1,\ldots,j-1} \neg  \Phi_i  \right ),
\end{equation}
where $R_\pattern$ is the set of indices of rules $r_i$ with $\sim_i = \pattern$, and $\Phi_i$ is the TSSL formula obtained from $C_i$.

\begin{figure}[h!]
   \centering
   \includegraphics[width=8.5cm,natwidth=8.5cm,natheight=6cm]{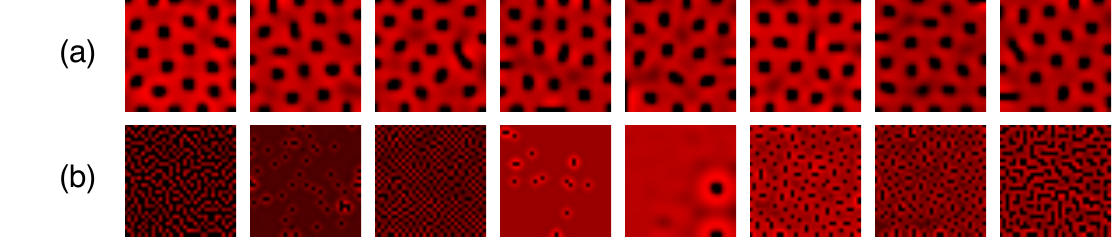}
   \caption{Sample sets of images from the sets (a) $\vec Y^{(1)}_{\pattern}$  and (b) $\vec Y^{(1)}_{\nopattern}$ for the LS pattern.}
\label{fig:traningSet}
\end{figure}

\begin{example}\label{ex:learning} \textbf{LS pattern.} For the LS pattern from Example~\ref{ex:system}, we generate a data set $\vec Y^{(1)}_{\pattern}$ containing $8000$ positive examples by simulating the reaction-diffusion system~\eqref{eq:rd_system_p_ex} from random initial conditions with parameters $\vec R$ and $\vec D_1$. Similarly, to generate the data set $\vec Y^{(1)}_{\nopattern}$ containing $8000$ negative examples, we simulate system~\eqref{eq:rd_system_p_ex} from random initial conditions. However, in this case we use $\vec R$ and randomly choose the diffusion coefficients from $\mathbb{R}_{[0,30]}^2$. As stated before, we only consider the observation of a system in steady-state, for this reason, simulated trajectories that do not reach steady state-in $60$ time units are discarded.  A sample set of images from the sets $\vec Y^{(1)}_{\pattern}$ and $\vec Y^{(1)}_{\nopattern}$ is shown in Figure~\ref{fig:traningSet}. We generate a labeled set $\dataset^{(1)}$ of QTS from these sets, and separate $\dataset^{(1)}$ into $\dataset^{(1)}_{L}, \dataset^{(1)}_{T}$.  
We use RIPPER algorithm implemented in Weka~\cite{WEKA} to learn a classifier from  $\dataset^{(1)}_{L}$. The learning step took $228.5sec$ on an iMac with a Intel Core i5 processor at 2.8GHz with 8GB of memory. The classifier consists of $24$ rules.  The first rule is
\begin{align}
r_1 :& (R \geq 0.59) \wedge (R  \leq 0.70)  \wedge (R.NW.NW.NW.SE \leq 0.75)\wedge  \nonumber \\
&  (R.NW.NW.NW.NW \geq 0.45)  \Rightarrow \pattern \nonumber ,
\end{align}
where $R$ denotes the root of a QTS, and the labels of the children are explained in Figure~\ref{fig:quadtree}.  
Rule $r_1$ translates to the following TSSL formula:
\footnotesize
\begin{align}
\Phi_1 : & (m \geq 0.59) \wedge (m \leq 0.70) \wedge ( \exists_{NW}  \bigcirc \exists_{NW}  \bigcirc \exists_{NW}  \bigcirc \exists_{SE}  \bigcirc m \geq  0.75 ) \wedge  \nonumber \\& ( \exists_{NW}  \bigcirc \exists_{NW}  \bigcirc \exists_{NW}  \bigcirc \exists_{NW}  \bigcirc m \geq  0.45 ). \nonumber
\end{align}
\normalsize

We define the TSSL formula $\Phi^{(1)}_{\pattern}$ characterizing the pattern as in~\eqref{eq:learnedFormula}, and model check QTSs from $\dataset^{(1)}_{T}$ $(|\dataset^{(1)}_{T}| = 8000)$ against  $\Phi^{(1)}_{\pattern}$, which yields  a high prediction accuracy ($96.11\% $) with $311$ miss-classified QTSs.

\textbf{FP and SS patterns.} We follow the above explained steps to generate data sets $\vec Y^{(i)}_{\pattern}, \vec Y^{(i)}_{\nopattern}$, generate labeled data sets $\dataset^{(i)}_{L}, \dataset^{(i)}_{T}$, and finally learn formulas $\Phi^{(i)}_{\pattern}$ for the FP and SS patterns corresponding to diffusion coefficient vectors $\vec D_i$, $i=2,3$ from Example~\ref{ex:system}. Due to the space limitations, we only present the results on the test sets.  The model checking of the QTSs from the corresponding test sets yields high prediction accuracies $98.01\%$, and $93.13\%$ for $\Phi^{(2)}_{\pattern}$, and $\Phi^{(3)}_{\pattern}$, respectively. 
\end{example}


\section{PARAMETER SYNTHESIS  FOR \\ PATTERN GENERATION}\label{sec:design}

In this section we present the solution to  Problem~\ref{prob:main}, \emph{i.e.} 
a framework to synthesize parameters $\paramSet \in \cal P$ of a reaction-diffusion system $\vec S$~\eqref{eq:rd_system} such that the observations of system $\systemP[\paramSet]$ satisfy  a given TSSL formula $\Phi$.
First, we show that the parameters of a reaction-diffusion system that produce trajectories satisfying the TSSL formula can be found by optimizing quantitative model checking results. Second, we include the optimization in a supervised iterative procedure for parameter synthesis. 

We slightly abuse the terminology and say that a trajectory $\vec x(t), t \geq 0$ of system $\systemP[\paramSet]$ satisfies $\Phi$ if the QTS $\mathcal{Q} = ( S, s_\iota, \tau, \Sigma, [.] , L)$ of the corresponding observation, $H(\vec x(\bar t))$, satisfies $\Phi$, \emph{i.e} $\mathcal{Q}  \models \Phi$, or $\llbracket \Phi \rrbracket (s_\iota) > 0$. 

We first define an induced quantitative valuation of a system $\systemP[\paramSet]$ and a set of initial conditions $\cal{X}_0$ from a TSSL formula $\Phi$ as:
\scriptsize
\begin{equation}\label{eq:inducedQuantitative}
	\llbracket \Phi \rrbracket (\systemP[\paramSet]) = \min_{x_0 \in \cal{X}_0} \{ \llbracket \Phi \rrbracket (s_\iota)  \mid \mathcal{Q} = ( S, s_\iota, \tau, \Sigma, [.] , L) \text{ is QTS of } H(\vec x(\bar t)), \vec x(0) = x_0 \}
\end{equation}
\normalsize
The definition of the induced valuation of a system $\systemP[\paramSet]$ implies that all trajectories of $\systemP[\paramSet]$ originating from $\cal{X}_0$ satisfy $\Phi$ if
$\llbracket \Phi \rrbracket (\systemP[\paramSet]) > 0.$
Therefore, it is sufficient to find $\paramSet$ that maximizes~\eqref{eq:inducedQuantitative}. 
It is assumed that the ranges $\cal{P} = \cal P_1 \times \ldots \times \cal P_P$ of the design parameters are known. Therefore, the parameters maximizing~\eqref{eq:inducedQuantitative} can be found with a greedy search on a quantization of $\cal{P}$. However, the computation of $\llbracket \Phi \rrbracket (\systemP[\paramSet])$ for a given $\paramSet \in \cal{P}$ is expensive, since it requires to perform the following steps for each $x_0 \in \cal{X}_0$: simulating the system $\systemP[\paramSet]$ from $x_0$, generating QTS $\cal{Q}$ of the corresponding observation, and quantitative model checking of $\cal{Q}$ against $\Phi$. Here, we use the particle swarm optimization (PSO) algorithm~\cite{KennedyEberhartPSO} over $\cal{P}$ with~\eqref{eq:inducedQuantitative} as the fitness function. The choice of PSO is  motivated by its inherent distributed nature, and its ability to operate on irregular search spaces. In particular, PSO does not require a differentiable fitness function. 

\begin{example} \label{ex:design} \textbf{LS pattern.}
We consider the reaction-diffusion system from Example~\ref{ex:system} and the TSSL formula $\Phi^{(1)}_{+}$ corresponding to the LS pattern from Example~\ref{ex:learning}. We assume that the parameters of the local dynamics are known, $\vec R = [1,-12,-1,16]$, and the diffusion coefficients $D_1$ and $D_2$ are set as the design parameters with $\cal{P} = \mathbb{R}^2_{[0,30]}$. We implement PSO to find $\paramSet \in \cal{P}$ maximizing the induced valuation~\eqref{eq:inducedQuantitative}. 
The PSO computation was distributed on $16$ processors at 2.1GHz on a cluster, and the running time was around $18$ minutes. The optimized parameters are $D_1 = 2.25$ and $D_2 = 29.42$, and the valuation of the system is $0.0023$. A set of observations obtained by simulating~$\systemP[{[2.25, 29.42]}]$ is shown in Figure~\ref{fig:learningObservations}-(a). Note that, while all the observations have some spatial periodicity indicating the presence of a pattern, they are still different from the desired LS pattern. 

\textbf{FP and SS patterns.} We also apply the PSO algorithm on the same setting explained above to maximize the induced valuation~\eqref{eq:inducedQuantitative} for the TSSL formulas  $\Phi^{(2)}_{\pattern}$ (FP pattern) and $\Phi^{(3)}_{\pattern}$ (SS pattern) from Example~\ref{ex:learning}.  The optimized parameters are $[0.083, 11.58]$ and $[1.75, 7.75]$ for $\Phi^{(2)}_{\pattern}$ and $\Phi^{(3)}_{\pattern}$, respectively. Sets of observations obtained by simulating systems  $\systemP[{[0.083, 11.58]}]$ and $\systemP[{[1.75, 7.75]}]$ are shown in Figure~\ref{fig:learningObservationsFP_SS}. In contrast with the LS pattern, the observations are similar to the ones from the corresponding data sets \emph{i.e.}  $\vec Y^{(2)}_{\pattern}$ and $\vec Y^{(3)}_{\pattern}$. 
\end{example}

\begin{figure}[h!]
   \centering
   \includegraphics[width=8.5cm,natwidth=8.5cm,natheight=2cm]{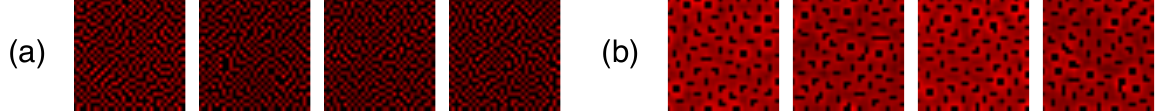}
   \caption{Sample set of observations obtained by simulating (a) $\systemP[{[0.083, 11.58]}]$ and (b) $\systemP[{[1.75, 7.75]}]$.}
\label{fig:learningObservationsFP_SS}
\end{figure}

\begin{figure}[h!]
   \centering
   \includegraphics[width=8.5cm,natwidth=8.5cm,natheight=6cm]{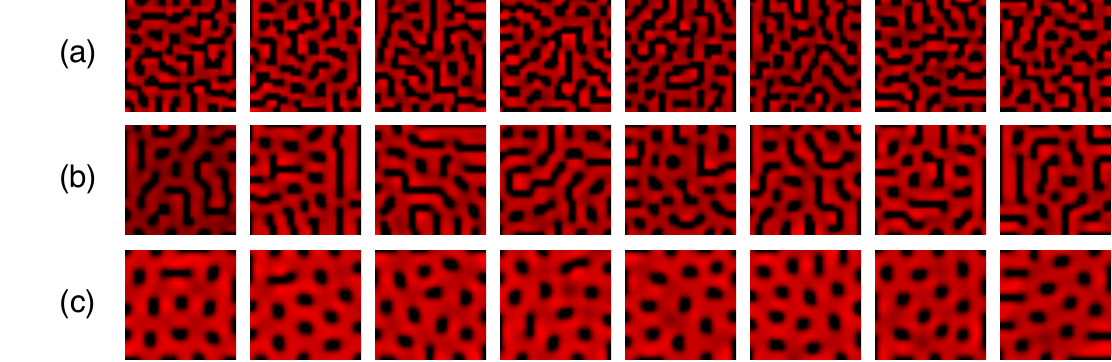}
   \caption{Sample set of observations obtained by simulating (a) $\systemP[{[2.25, 29.42]}]$, (b) $\systemP[{[3.75, 28.75]}]$, and (c) $\systemP[{[6.25, 29.42]}]$.}
\label{fig:learningObservations}
\end{figure}

\begin{remark}
In this paper, we consider the observations generated from a given set of initial conditions $\cal{X}_0$. However, the initial condition can be set as a design parameter and optimized in PSO over a given domain $\mathbb{R}_{[a,b]}^{K\times K \times N}$.
\end{remark}

As seen in Example \ref{ex:design}, it is possible that simulations of the system corresponding to optimized parameters do not necessarily lead to desired patterns. This should not be unexpected, as the formula reflects the original training set of positive and negative examples, and was not ``aware" that these new simulations are not good patterns. A natural extension of our method should allow to add the newly obtained simulations to the negative training set, and to reiterate the whole procedure. This approach is summarized in the \textsc{InteractiveDesign} algorithm.

\begin{table}[!ht]
\label{list:algorithm2}
\begin{tabular*}{8.5cm}{rl}
\hline
\noalign{\vskip 0.3em}
\multicolumn{2}{l}{\textbf{Algorithm} \textsc{InteractiveDesign}} \\
\noalign{\vskip 0.3em}
\hline
\noalign{\vskip 0.3em}
\multicolumn{2}{l}{\footnotesize{\textbf{Input}:} {\hspace*{0.2cm}}\scriptsize{Parametric reaction-diffusion system $\vec S$, ranges of parameters $\cal{P}$,}}\\ 
\multicolumn{2}{l}{ {\hspace*{1.08cm}}\scriptsize{a set of initial states $\cal{X}_0$, sets of observations $\vec Y_{\pattern}$ and $\vec Y_{\nopattern}$}}\\ 
\multicolumn{2}{l}{\footnotesize{\textbf{Output}:}~\scriptsize{Optimized parameters $\paramSet$, the corresponding valuation $\gamma$  }} \\
\multicolumn{2}{l}{{\hspace*{1.0cm}}~\scriptsize{(no solution if $\gamma < 0$)}} \\
\noalign{\vskip 0.5em}
1: & \verb| |\footnotesize{\textbf{while} $True$ \textbf{do}} \\
2: & \verb|   |\footnotesize{$\Phi = Learning(\vec Y_{\pattern}, \vec Y_{\nopattern})$ }\\
3: & \verb|   |\footnotesize{$\{ \paramSet , \gamma \} = Optimization(\vec  S, \cal{X}_0, \Phi)$  \hspace*{0.4cm}  } \\
\multicolumn{2}{l}{{\hspace*{3.3cm}}~\footnotesize{$\triangleright$ $\gamma$ is the induced valuation of $  \systemP[\paramSet]$ }} \\
4: &  \verb|   |\footnotesize{\textbf{if} $\gamma < 0$ \textbf{then return } $\paramSet, \gamma$} \\

5: &  \verb|   |\footnotesize{\textbf{end if}} \\
6: &  \verb|   |\footnotesize{UserQuery: Show observations of trajectories }\\
\multicolumn{2}{l}{{\hspace*{3.3cm}}~\footnotesize{of $\systemP[\paramSet]$ originating from $\cal{X}_0$.}} \\
7: &  \verb|   |\footnotesize{\textbf{if} User approves \textbf{then return } $\paramSet, \gamma$} \\
8: &  \verb|   |\footnotesize{\textbf{else}}\\
9: &  \verb|      |\footnotesize{$\vec Y_{\nopattern} = \vec Y_{\nopattern} \cup \{H(x(\bar t) ) \mid x(t),t\geq 0,$}\\
\multicolumn{2}{l}{{\hspace*{3.3cm}}~\footnotesize{$ \text{ is generated by }\systemP[\paramSet], x(0) \in \cal{X}_0\}.$}} \\
10: &  \verb|   |\footnotesize{\textbf{end if} }\\
11: & \verb| |\footnotesize{\textbf{end while}} \\

\noalign{\vskip 0.3em}
\hline
\end{tabular*}
\end{table}

We start with the user defined sets of observations $\vec Y_{\pattern}$ and $\vec Y_{\nopattern}$, and learn a TSSL formula $\Phi$ from the QTS representations of the observations (Section~\ref{sec:learning}). Then, in the optimization step, we find a set of parameters $\paramSet$ that maximizes $\gamma = \llbracket \Phi \rrbracket (\systemP[\paramSet])$. If $\gamma < 0$, then we terminate the algorithm as parameters producing observations similar to the ones from the set $\vec Y_{\pattern}$ with respect to the TSSL formula $\Phi$ could not be found. If $\gamma \geq 0$, then the observations of system  $\systemP[\paramSet]$ satisfy $\Phi$.  Finally, the user inspects the observations generated from the reaction-diffusion system with the optimized set of parameters $\systemP[\paramSet]$. If the observations are similar to the ones from the set $\vec Y_{\pattern}$, then we find a solution. If, however, the user decides that the observations do not contain the pattern, then we add observations obtained from system $\systemP[\paramSet]$ to $\vec Y_{\nopattern}$, and repeat the process, \emph{i.e} learn a new formula, run the optimization until the user terminates the process or the optimization step fails ($\gamma < 0$).

\begin{example}\textbf{LS pattern.}
We apply \textsc{InteractiveDesign} algorithm to the system from Example~\ref{ex:design}. A sample set of observations obtained in the first iteration is shown in Figure~\ref{fig:learningObservations}-(a). We decide that these observations are not similar to the ones from the set $\vec Y^{(1)}_{\pattern}$ shown in Figure~\ref{fig:traningSet}-(a), and add these $250$ observations generated with the optimized parameters to $\vec Y^{(1)}_{\nopattern}$ (line 9). In the second iteration, the optimized parameters are $D_1 = 3.75$ and $D_2 = 28.75$, and the observations obtained by simulating~$\systemP[{[3.75, 28.75]}]$ are shown in Figure~\ref{fig:learningObservations}-(b). We continue by adding these to $\vec Y^{(1)}_{\nopattern}$. The parameters computed in the third iteration are $D_1 = 6.25$ and $D_2 = 29.42$. The observations obtained by simulating $\systemP[{[6.25, 29.42]}]$ are shown in Figure~\ref{fig:learningObservations}-(c). 
Although the optimized parameters are different from $\vec D_1$, which was used to generate $\vec Y^{(1)}_{\pattern}$, the observations of $\systemP[{[6.25, 29.42]}]$ are similar to the ones from the set $\vec Y^{(1)}_{\pattern}$ and we terminate the algorithm.
\end{example}


\section{CONCLUSION AND FUTURE WORK}
\label{sec:conclusion}

We defined a tree spatial superposition logic (TSSL) whose 
semantics is naturally interpreted over quad trees of partitioned 
images. We showed that formulas in this logic can be efficiently 
learned from positive and negative examples. 
We defined a quantitative semantics for TSSL and combined with 
an optimization algorithm to develop a supervised, iterative procedure 
for synthesis of pattern-producing parameters.    

While the experiments show that the current version of the logic works 
quite well and can accommodate translational and rotational symmetries 
commonly found in biology patterns, there are several directions of future work. 
First, we expect that even better results could be obtained if more statistical 
moments were used, rather than just the mean as in the current version of 
this work. Second, we do not exploit the full semantics of the logic in this paper. 
In future work, we plan to investigate reasoning about multiple branches and 
using the ``until" operator. Third, we plan to apply this method to more realistic 
networks, such as populations of locally interacting engineered cells. We expect 
that experimental techniques from synthetic biology can be used to ``tune" existing 
synthetic gene circuits to produce global desired patterns. 



%
%
%
%

\end{document}